\documentclass{article} 
\usepackage{iclr2020_conference,times}


\usepackage{amsmath,amsfonts,bm}









\def\eqref#1{equation~\ref{#1}}









\def\1{\bm{1}}










\DeclareMathAlphabet{\mathsfit}{\encodingdefault}{\sfdefault}{m}{sl}
\SetMathAlphabet{\mathsfit}{bold}{\encodingdefault}{\sfdefault}{bx}{n}













\DeclareMathOperator*{\argmax}{arg\,max}
\DeclareMathOperator*{\argmin}{arg\,min}

\DeclareMathOperator{\sign}{sign}

\usepackage{hyperref}
\usepackage{url}
\usepackage{lipsum}

\usepackage{amsmath}        
\usepackage{amsthm}         
\usepackage{amssymb}        
\usepackage{mathtools}      
\usepackage[ruled,vlined,linesnumbered]{algorithm2e}    
\usepackage{algpseudocode}    
\usepackage{graphicx}       
\usepackage{subcaption}     
\usepackage{appendix}       
\usepackage{float}          

\usepackage{multirow}       
\usepackage{lipsum}



\newtheorem{theorem}{Theorem}[section]

\DeclareMathOperator{\xinput}{\mathbf{x}}

\DeclarePairedDelimiterX{\infdivx}[2]{(}{)}{%
  #1\;\delimsize\|\;#2%
}
\newcommand{\klinfdiv}{KL\infdivx}

\title{Jacobian Adversarially Regularized \\ Networks for Robustness}


\author{Alvin Chan$^1$\thanks{Corresponding author: \texttt{guoweial001@ntu.edu.sg}},~~ Yi Tay$^1$,~~ Yew-Soon Ong$^1$,~~ Jie Fu$^2$\\
$^1$Nanyang Technological University, \:\:\: $^2$Mila, Polytechnique Montreal
}


%

\iclrfinalcopy 
\begin{document}

\maketitle

\begin{abstract}
Adversarial examples are crafted with imperceptible perturbations with the intent to fool neural networks. Against such attacks, adversarial training and its variants stand as the strongest defense to date. Previous studies have pointed out that robust models that have undergone adversarial training tend to produce more salient and interpretable Jacobian matrices than their non-robust counterparts. A natural question is whether a model trained with an objective to produce salient Jacobian can result in better robustness. This paper answers this question with affirmative empirical results. We propose Jacobian Adversarially Regularized Networks (JARN) as a method to optimize the saliency of a classifier's Jacobian by adversarially regularizing the model's Jacobian to resemble natural training images\footnote{Source code available at \texttt{https://github.com/alvinchangw/JARN\_ICLR2020}}. Image classifiers trained with JARN show improved robust accuracy compared to standard models on the MNIST, SVHN and CIFAR-10 datasets, uncovering a new angle to boost robustness without using adversarial training examples.

\end{abstract}

\section{Introduction}

Deep learning models have shown impressive performance in a myriad of classification tasks \citep{lecun2015deep}. Despite their success, deep neural image classifiers are found to be easily fooled by visually imperceptible adversarial perturbations \citep{szegedy2013intriguing}. These perturbations can be crafted to reduce accuracy during test time or veer predictions towards a target class. This vulnerability not only poses a security risk in using neural networks in critical applications like autonomous driving \citep{bojarski2016end} but also presents an interesting research problem about how these models work.

Many adversarial attacks have come into the scene \citep{carlini2017towards,papernot2018cleverhans,croce2019minimally}, not without defenses proposed to counter them \citep{gowal2018effectiveness,zhang2019theoretically}. Among them, the best defenses are based on \textit{adversarial training} (AT) where models are trained on adversarial examples to better classify adversarial examples during test time \citep{madry2017towards}. While several effective defenses that employ adversarial examples have emerged \citep{qin2019adversarial,shafahi2019adversarial}, generating strong adversarial training examples adds non-trivial computational burden on the training process \citep{kannan2018adversarial,xie2019feature}.

Adversarially trained models gain robustness and are also observed to produce more salient Jacobian matrices (Jacobians) at the input layer as a side effect \citep{tsipras2018robustness}. These Jacobians visually resemble their corresponding images for robust models but look much noisier for standard non-robust models. It is shown in theory that the saliency in Jacobian is a result of robustness \citep{etmann2019connection}. A natural question to ask is this: can an improvement in Jacobian saliency induce robustness in models? In other words, could this side effect be a new avenue to boost model robustness? To the best of our knowledge, this paper is the first to show affirmative findings for this question. 

To enhance the saliency of Jacobians, we draw inspirations from neural generative networks \citep{StarGAN2018,dai2019diagnosing}. More specifically, in generative adversarial networks (GANs) \citep{goodfellow2014generative}, a generator network learns to generate natural-looking images with a training objective to fool a discriminator network. In our proposed approach, Jacobian Adversarially Regularized Networks (JARN), the classifier learns to produce salient Jacobians with a regularization objective to fool a discriminator network into classifying them as input images. This method offers a new way to look at improving robustness without relying on adversarial examples during training. With JARN, we show that directly training for salient Jacobians can advance model robustness against adversarial examples in the MNIST, SVHN and CIFAR-10 image dataset. When augmented with adversarial training, JARN can provide additive robustness to models thus attaining competitive results. All in all, the prime contributions of this paper are as follows:
\begin{itemize}
    \item We show that directly improving the saliency of classifiers' input Jacobian matrices can increase its adversarial robustness.
    \item To achieve this, we propose Jacobian adversarially regularized networks (JARN) as a method to train classifiers to produce salient Jacobians that resemble input images.
    \item Through experiments in MNIST, SVHN and CIFAR-10, we find that JARN boosts adversarial robustness in image classifiers and provides additive robustness to adversarial training.
\end{itemize}

\section{Background and Related Work} \label{sec:related work}

Given an input $\xinput$, a classifier $f(\xinput ; \theta): \xinput \mapsto \mathbb{R}^k$ maps it to output probabilities for $k$ classes in set $C$, where $\theta$ is the classifier's parameters and $\mathbf{y} \in \mathbb{R}^k$ is the one-hot label for the input. With a training dataset $D$, the standard method to train a classifier $f$ is empirical risk minimization (ERM), through $\min_{\theta} \mathbb{E}_{(\xinput, \mathbf{y}) \sim D} \mathcal{L}(\xinput, \mathbf{y})$, where $\mathcal{L}(\xinput, \mathbf{y})$ is the standard cross-entropy loss function defined as 

\begin{equation}
 \mathcal{L}(\xinput, \mathbf{y}) = \mathbb{E}_{(\xinput, \mathbf{y}) \sim D} \left[ - \mathbf{y}^\top \log f (\xinput) \right]
\end{equation}

While ERM trains neural networks that perform well on holdout test data, their accuracy drops drastically in the face of adversarial test examples. With an adversarial perturbation of magnitude $\varepsilon$ at input $\xinput$, a model is robust against this attack if
\begin{equation}
    \argmax_{i \in C} f_i(\xinput ; \theta) = \argmax_{i \in C} f_i(\xinput + \delta ; \theta) ~,~~~ \forall \delta \in B_p (\varepsilon) = {\delta : \| \delta \|_p \leq \varepsilon}
\end{equation}

We focus on $p = \infty$ in this paper.

\paragraph{Adversarial Training}
To improve models' robustness, adversarial training (AT) \citep{goodfellow2016deep} seek to match the training data distribution with the adversarial test distribution by training classifiers on adversarial examples. Specifically, AT minimizes the loss function:

\begin{equation} \label{eq:AT objective}
\mathcal{L}(\xinput, \mathbf{y}) = \mathbb{E}_{(\xinput, \mathbf{y}) \sim D} \left[ \max_{\delta \in B(\varepsilon)} \mathcal{L}(\xinput + \delta, \mathbf{y}) \right]
\end{equation}

where the inner maximization, $\max_{\delta \in B(\varepsilon)} \mathcal{L}(\xinput + \delta, \mathbf{y})$, is usually performed with an iterative gradient-based optimization. Projected gradient descent (PGD) is one such strong defense which performs the following gradient step iteratively:
\begin{equation} \label{eq:pgd step}
    \delta \gets \mathrm{Proj} \left[ \delta - \eta ~ \sign \left( \nabla_{\delta} \mathcal{L}(\xinput + \delta, \mathbf{y}) \right) \right ]
\end{equation}
where $\mathrm{Proj}(\xinput) = \argmin_{\zeta \in B(\varepsilon)} \| \xinput - \zeta \|$. The computational cost of solving Equation~(\ref{eq:AT objective}) is dominated by the inner maximization problem of generating adversarial training examples. A naive way to mitigate the computational cost involved is to reduce the number gradient descent iterations but that would result in weaker adversarial training examples. A consequence of this is that the models are unable to resist stronger adversarial examples that are generated with more gradient steps, due to a phenomenon called obfuscated gradients \citep{carlini2017towards,uesato2018adversarial}.

Since the introduction of AT, a line of work has emerged that also boosts robustness with adversarial training examples. Capturing the trade-off between natural and adversarial errors, TRADES \citep{zhang2019theoretically} encourages the decision boundary to be smooth by adding a regularization term to reduce the difference between the prediction of natural and adversarial examples. \citet{qin2019adversarial} seeks to smoothen the loss landscape through local linearization by minimizing the difference between the real and linearly estimated loss value of adversarial examples. To improve adversarial training, \citet{zhang2019defense} generates adversarial examples by feature scattering, i.e., maximizing feature matching distance between the examples and clean samples.

\citet{tsipras2018robustness} observes that adversarially trained models display an interesting phenomenon: they produce salient Jacobian matrices ($\nabla_{\xinput} \mathcal{L}$) that loosely resemble input images while less robust standard models have noisier Jacobian. \citet{etmann2019connection} explains that linearized robustness (distance from samples to decision boundary) increases as the alignment between the Jacobian and input image grows. They show that this connection is strictly true for linear models but weakens for non-linear neural networks. While these two papers show that robustly trained models result in salient Jacobian matrices, our paper aims to investigate whether directly training to generate salient Jacobian matrices can result in robust models. 


\paragraph{Non-Adversarial Training Regularization}
Provable defenses are first proposed to bound minimum adversarial perturbation for certain types of neural networks \citep{hein2017formal,weng2018proven,raghunathan2018semidefinite}. One of the most advanced defense from this class of work \citep{wong2018scaling} uses a dual network to bound the adversarial perturbation with linear programming. The authors then optimize this bound during training to boost adversarial robustness. Apart from this category, closer to our work, several works have studied a regularization term on top of the standard training objective to reduce the Jacobian's Frobenius norm. This term aims to reduce the effect input perturbations have on model predictions. \citet{drucker1991double} first proposed this to improve model generalization on natural test samples and called it `double backpropagation'. Two subsequent studies found this to also increases robustness against adversarial examples \citet{ross2018improving,jakubovitz2018improving}. Recently, \citet{hoffman2019robust} proposed an efficient method to approximate the input-class probability output Jacobians of a classifier to minimize the norms of these Jacobians with a much lower computational cost. \citet{simon2019first} proved that double backpropagation is equivalent to adversarial training with $l_2$ examples. \citet{etmann2019connection} trained robust models using double backpropagation to study the link between robustness and alignment in non-linear models but did not propose a new defense in their paper. While the double backpropagation term improves robustness by reducing the effect that perturbations in individual pixel have on the classifier’s prediction through the Jacobians’ norm, it does not have the aim to optimize Jacobians to explicitly resemble their corresponding images semantically. Different from these prior work, we train the classifier with an adversarial loss term with the aim to make the Jacobian resemble input images more closely and show in our experiments that this approach confers more robustness.


\section{Jacobian Adversarially Regularized Networks (JARN)}
\paragraph{Motivation}
Robustly trained models are observed to produce salient Jacobian matrices that resemble the input images. This begs a question in the reverse direction: will an objective function that encourages Jacobian to more closely resemble input images, will standard networks become robust? To study this, we look at neural generative networks where models are trained to produce natural-looking images. We draw inspiration from generative adversarial networks (GANs) where a generator network is trained to progressively generate more natural images that fool a discriminator model, in a min-max optimization scenario \citep{goodfellow2014generative}. More specifically, we frame a classifier as the generator model in the GAN framework so that its Jacobians can progressively fool a discriminator model to interpret them as input images.

Another motivation lies in the high computational cost of the strongest defense to date, adversarial training. The cost on top of standard training is proportional to the number of steps its adversarial examples take to be crafted, requiring an additional backpropagation for each iteration. Especially with larger datasets, there is a need for less resource-intensive defense. In our proposed method (JARN), there is only one additional backpropagation through the classifier and the discriminator model on top of standard training. We share JARN in the following paragraphs and offer some theoretical analysis in \S~\ref{sec:theoretical analysis}.

\paragraph{Jacobian Adversarially Regularized Networks} Denoting input as $\xinput \in \mathbb{R}^{hwc}$ for $h \times w$-size images with $c$ channels, one-hot label vector of $k$ classes as $\mathbf{y} \in \mathbb{R}^k$, we express $f_{\text{cls}} (\xinput) \in \mathbb{R}^k$ as the prediction of the classifier ($f_{\text{cls}}$), parameterized by $\theta$. The standard cross-entropy loss is

\begin{equation} \label{eq:xent loss}
 \mathcal{L}_{\text{cls}} = \mathbb{E}_{(\xinput, \mathbf{y})} \left[ - \mathbf{y}^\top \log f_{\text{cls}} (\xinput) \right]
\end{equation}

With gradient backpropagation to the input layer, through $f_{\text{cls}}$ with respect to $\mathcal{L}_{\text{cls}}$, we can get the Jacobian matrix $J \in \mathbb{R}^{hwc}$ as:

\begin{equation}
J(\xinput) \coloneqq \nabla_{\xinput} \mathcal{L}_{\text{cls}} = \left[\frac{\partial \mathcal{L}_{\text{cls}}}{\partial {\xinput_{1}}} ~~~\cdots~~~ \frac{\partial \mathcal{L}_{\text{cls}}}{\partial {\xinput_{d}}} \right]
\end{equation}
where $d=hwc$. The next part of JARN entails adversarial regularization of Jacobian matrices to induce resemblance with input images. Though the Jacobians of robust models are empirically observed to be similar to images, their distributions of pixel values do not visually match \citep{etmann2019connection}. The discriminator model may easily distinguish between the Jacobian and natural images through this difference, resulting in the vanishing gradient \citep{arjovsky2017wasserstein} for the classifier train on. To address this, an adaptor network ($f_{apt}$) is introduced to map the Jacobian into the domain of input images. In our experiments, we use a single 1x1 convolutional layer with $tanh$ activation function to model $f_{apt}$, expressing its model parameters as $\psi$. With the $J$ as the input of $f_{\text{apt}}$, we get the adapted Jacobian matrix $J' \in \mathbb{R}^{hwc}$,

\begin{equation}
J' = f_{\text{apt}}(J)
\end{equation}

We can frame the classifier and adaptor networks as a generator $G(\xinput, \mathbf{y})$
\begin{equation}
G_{\theta,\psi}(\xinput, \mathbf{y}) = f_{\text{apt}}(~\nabla_{\xinput} \mathcal{L}_{\text{cls}}(\xinput, \mathbf{y})~)
\end{equation}
learning to model distribution of $p_{J'}$ that resembles $p_{\xinput}$.

We now denote a discriminator network, parameterized by $\phi$, as $f_{\text{disc}}$ that outputs a single scalar. $f_{\text{disc}}(\mathbf{\xinput})$ represents the probability that $\mathbf{\xinput}$ came from training images $p_{\xinput}$ rather than $p_{J'}$. To train $G_{\theta,\psi}$ to produce $J'$ that $f_{\text{disc}}$ perceive as natural images, we employ the following adversarial loss:

\begin{equation}  \label{eq:adv loss}
\begin{aligned}
\mathcal{L}_{\text{adv}} 
& = \mathbb{E}_{\xinput} [ \log f_{\text{disc}}(\mathbf{x}) ] + \mathbb{E}_{J'} [\log (1 - f_{\text{disc}}(J'))] \\
& = \mathbb{E}_{\xinput} [ \log f_{\text{disc}}(\mathbf{x}) ] + \mathbb{E}_{(\xinput, \mathbf{y})} [\log (1 - f_{\text{disc}}(G_{\theta,\psi}(\xinput)) )] \\
& = \mathbb{E}_{\xinput} [ \log f_{\text{disc}}(\mathbf{x}) ] + \mathbb{E}_{(\xinput, \mathbf{y})} \left[\log ( 1 - f_{\text{disc}}(~f_{\text{apt}}(~\nabla_{\xinput} \mathcal{L}_{\text{cls}}(\xinput, \mathbf{y}) ))) \right]
\end{aligned}
\end{equation}

Combining this regularization loss with the classification loss function $\mathcal{L}_{\text{cls}}$ in Equation~(\ref{eq:xent loss}), we can optimize through stochastic gradient descent to approximate the optimal parameters for the classifier $f_{\text{cls}}$ as follows,

\begin{equation}
\theta^* = \argmin_{\theta} (\mathcal{L}_{cls} + \lambda_{adv} \mathcal{L}_{adv})
\end{equation}

where $\lambda_{adv}$ control how much Jacobian adversarial regularization term dominates the training.

Since the adaptor network ($f_{\text{apt}}$) is part of the generator $G$, its optimal parameters $\psi^*$ can be found with minimization of the adversarial loss,
\begin{equation}
\psi^* = \argmin_{\psi} \mathcal{L}_{adv}
\end{equation}

On the other hand, the discriminator ($f_{\text{disc}}$) is optimized to maximize the adversarial loss term to distinguish Jacobian from input images correctly,
\begin{equation}
\phi^* = \argmax_{\phi} \mathcal{L}_{adv}
\end{equation}

Analogous to how generator from GANs learn to generate images from noise, we add $[-\varepsilon, -\varepsilon]$ uniformly distributed noise to input image pixels during JARN training phase. Figure~\ref{fig:JARN} shows a summary of JARN training phase while Algorithm~\ref{algo:JARN Training} details the corresponding pseudo-codes. In our experiments, we find that using JARN framework only on the last few epoch (25\%) to train the classifier confers similar adversarial robustness compared to training with JARN for the whole duration. This practice saves compute time and is used for the results reported in this paper.

\begin{figure}[ht]
    \centering
    \includegraphics[width=0.6\textwidth]{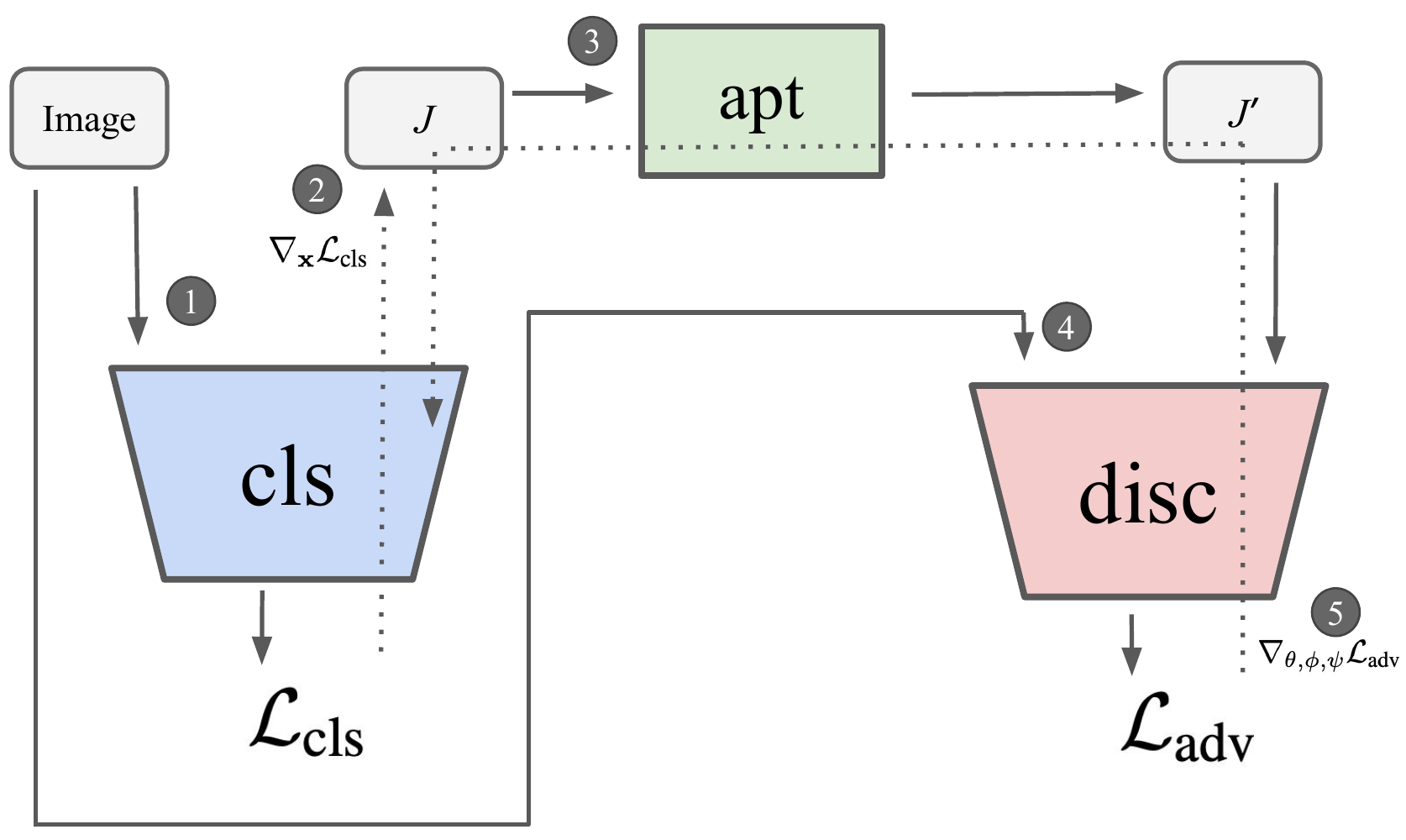}
    \caption{Training architecture of JARN.}
    \label{fig:JARN}
\end{figure}


\begin{algorithm}
\footnotesize
 \caption{Jacobian Adversarially Regularized Network}
 \label{algo:JARN Training}

\textbf{Input:} Training data $\mathcal{D}_{\text{train}}$, Learning rates for classifier $f_{\text{cls}}$, adaptor $f_{\text{apt}}$ and discriminator $f_{\text{disc}}$: ($\alpha, \beta, \gamma$)

\For{ each training iteration }{
 Sample $(\xinput, \mathbf{y}) \sim \mathcal{D}_{\text{train}}$
 

 $\xinput \gets \xinput + \xi,~~~~ \xi_{i} \sim \mathrm{unif}[-\varepsilon, \varepsilon] $
 
 $\mathcal{L}_{\text{cls}} \gets - \mathbf{y}^\top \log f_{\text{cls}} (\xinput)$ \algorithmiccomment{(1) Compute classification cross-entropy loss}
 
 $J \gets \nabla_{\xinput} \mathcal{L}_{\text{cls}}$ \algorithmiccomment{(2) Compute Jacobian matrix}
 
 $J' \gets f_{\text{apt}}(J)$ \algorithmiccomment{(3) Adapt Jacobian to image domain}
 
 $\mathcal{L}_{\text{adv}} \gets  \log f_{\text{disc}}(\mathbf{x}) + \log ( 1- f_{\text{disc}}(J'))$ \algorithmiccomment{(4) Compute adversarial loss}
 
 
 $\theta \gets \theta - \alpha ~ \nabla_{\theta} (\mathcal{L}_{cls} + \lambda_{adv} \mathcal{L}_{adv})  $ \algorithmiccomment{(5a) Update the classifier $f_{\text{cls}}$ to minimize $\mathcal{L}_{cls}$ and $\mathcal{L}_{adv}$}
 
 
 $\psi \gets \psi - \beta ~ \nabla_{\psi} \mathcal{L}_{adv}  $  \algorithmiccomment{(5b) Update the adaptor $f_{\text{apt}}$ to minimize $\mathcal{L}_{adv}$}
 
 
 $\phi \gets \phi + \gamma ~ \nabla_{\phi} \mathcal{L}_{adv}  $  \algorithmiccomment{(5c) Update the discriminator $f_{\text{disc}}$ to maximize $\mathcal{L}_{adv}$}
 
}
\end{algorithm}

\subsection{Theoretical Analysis} \label{sec:theoretical analysis}
Here, we study the link between JARN's adversarial regularization term with the notion of linearized robustness. Assuming a non-parameteric setting where the models have infinite capacity, we have the following theorem while optimizing $G$ with the adversarial loss $\mathcal{L}_{adv}$.
\begin{theorem} \label{theorem:generator identity mapping}
The global minimum of $\mathcal{L}_{\text{adv}}$ is achieved when $G(\xinput)$ maps $\xinput$ to itself, i.e., $G(\xinput) = \xinput$.
\end{theorem}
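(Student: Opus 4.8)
The plan is to follow the classical minimax analysis of \citet{goodfellow2014generative}, treating $G_{\theta,\psi}$ as the generator that induces a distribution $p_{J'}$ on the adapted Jacobians and $f_{\text{disc}}$ as the discriminator, and to exploit the infinite-capacity (non-parametric) assumption so that both the discriminator and the induced density can be optimized pointwise. The target is to show that the optimal value of $\mathcal{L}_{\text{adv}}$ is attained precisely when $p_{J'} = p_{\xinput}$, and then to observe that the identity map $G(\xinput)=\xinput$ realizes this condition.

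First I would fix the generator $G$ (equivalently, fix the induced density $p_{J'}$) and solve the inner maximization over $f_{\text{disc}}$. Writing $\mathcal{L}_{\text{adv}}$ as a single integral over the input space,
\[
\mathcal{L}_{\text{adv}} = \int \left[ p_{\xinput}(\xinput) \log f_{\text{disc}}(\xinput) + p_{J'}(\xinput) \log\bigl(1 - f_{\text{disc}}(\xinput)\bigr) \right] d\xinput ,
\]
the integrand at each $\xinput$ has the form $a\log t + b\log(1-t)$ with $a,b \ge 0$, which is maximized over $t\in(0,1)$ at $t = a/(a+b)$. Hence the optimal discriminator is $f_{\text{disc}}^{*}(\xinput) = p_{\xinput}(\xinput)\,/\,\bigl(p_{\xinput}(\xinput) + p_{J'}(\xinput)\bigr)$.

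Substituting $f_{\text{disc}}^{*}$ back, I obtain the virtual objective $C(G)$ as a functional of $p_{J'}$ alone, and I would recast it in the standard form
\[
C(G) = -\log 4 + 2\,\mathrm{JSD}\bigl(p_{\xinput} \,\|\, p_{J'}\bigr),
\]
where $\mathrm{JSD}$ denotes the Jensen--Shannon divergence. Since $\mathrm{JSD}\ge 0$ with equality if and only if $p_{J'} = p_{\xinput}$, the global minimum of $\mathcal{L}_{\text{adv}}$ equals $-\log 4$ and is attained exactly when the generator's output distribution matches the data distribution. To close the argument I set $G(\xinput)=\xinput$, which makes $J'=\xinput$ deterministically, so $p_{J'} = p_{\xinput}$; therefore the identity map attains the global minimum, as claimed.

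\textbf{Main obstacle.} The delicate point is that this analysis only certifies equality of the \emph{distributions} $p_{J'} = p_{\xinput}$, not the \emph{pointwise} identity $G(\xinput)=\xinput$. The theorem should thus be read as asserting that the identity map is a sufficient global minimizer rather than the unique one, since any $G$ whose pushforward equals $p_{\xinput}$ attains the same optimum; I would state this explicitly to avoid overclaiming. A secondary technical hurdle is justifying the pointwise maximization of the discriminator and the resulting rewriting of $C(G)$ via $\mathrm{JSD}$, both of which rely essentially on the infinite-capacity hypothesis — without it the optimizer $f_{\text{disc}}^{*}$ need not be representable and the value $-\log 4$ need not be tight.
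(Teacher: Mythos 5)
Your proposal is correct and follows essentially the same route as the paper's own proof: derive the optimal discriminator $f_{\text{disc}}^{*}(\xinput) = p_{\xinput}(\xinput)/(p_{\xinput}(\xinput)+p_{J'}(\xinput))$, substitute it back to rewrite $\mathcal{L}_{\text{adv}}$ as $2\,\mathrm{JSD}(p_{\xinput}\,\|\,p_{J'}) - \log 4$, and conclude that $G(\xinput)=\xinput$ attains the minimum since it forces $p_{J'}=p_{\xinput}$ (the paper cites \citet{goodfellow2014generative} for the optimal discriminator where you derive it pointwise, but this is the same argument). Your closing remark is also well taken: like your proof, the paper's proof establishes only that the identity map is \emph{a} global minimizer, not the unique one, which is consistent with how the theorem is stated.
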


Its proof is deferred to \S~\ref{sec:generator identity mapping}. If we assume Jacobian $J$ of our classifier $f_{\text{cls}}$ to be the direct output of $G$, then $J = G(\xinput) = \xinput$ at the global minimum of the adversarial objective. 

In \citet{etmann2019connection}, it is shown that the linearized robustness of a model is loosely upper-bounded by the alignment between the Jacobian and the input image. More concretely, denoting $\Psi^{i}$ as the logits value of class $i$ in a classifier $F$, its linearized robustness $\rho$ can be expressed as $\rho(\xinput) \coloneqq \min_{j \neq i^*} \frac{\Psi^{i^*}(\xinput) - \Psi^{j}(\xinput)} { \| \nabla_{\xinput}\Psi^{i^*}(\xinput) - \nabla_{\xinput}\Psi^{j}(\xinput) \|}$. Here we quote the theorem from \citet{etmann2019connection}:
\begin{theorem} [Linearized Robustness Bound] \citep{etmann2019connection} \label{theorem:linearized robustness}
Defining $i^* = \argmax_{i} \Psi^{i}$ and $j^* = \argmax_{j \neq i^*} \Psi^{j}$ as top two prediction, we let the Jacobian with respect to the difference in top two logits be $g \coloneqq \nabla_{\xinput}(\Psi^{i^*} - \Psi^{j^*})(\xinput)$. Expressing alignment between the Jacobian with the input as  $\alpha(\xinput) = \frac{|\langle \xinput, g \rangle|}{\| g \|} $, then
\begin{equation} \label{eq:linearized robustness bound}
\rho(\xinput) \leq \alpha(\xinput) + \frac{C}{\| g \|}
\end{equation}
where $C$ is a positive constant.
\end{theorem}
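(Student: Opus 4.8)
The plan is to reduce the minimum defining $\rho(\xinput)$ to a single competitor term and then exploit the (locally) affine structure of the logit difference. First I would note that since $\rho(\xinput)$ is a minimum over all $j \neq i^*$, it is bounded above by the value of the ratio at the specific second-place competitor $j = j^*$, which is one of the admissible terms:
\begin{equation}
\rho(\xinput) \le \frac{\Psi^{i^*}(\xinput) - \Psi^{j^*}(\xinput)}{\| \nabla_{\xinput}\Psi^{i^*}(\xinput) - \nabla_{\xinput}\Psi^{j^*}(\xinput) \|} = \frac{\Psi^{i^*}(\xinput) - \Psi^{j^*}(\xinput)}{\| g \|}.
\end{equation}
This already isolates $\|g\|$ in the denominator, matching the denominator in $\alpha(\xinput)$, and no subtlety about which index attains the minimum remains after this step.

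Next I want to control the numerator $\Psi^{i^*}(\xinput) - \Psi^{j^*}(\xinput)$ by $|\langle \xinput, g\rangle|$ up to an additive constant. The key is to split each logit into its degree-one homogeneous part plus a remainder. For a piecewise-linear classifier, on the local linear region containing $\xinput$ each logit is affine, so $\Psi^{i}(\xinput) = \langle \nabla_{\xinput}\Psi^{i}(\xinput), \xinput\rangle + \beta^{i}$, where $\beta^{i}$ is the effective bias of that region (equivalently, Euler's relation gives $\langle \nabla_{\xinput}\Psi^{i}, \xinput\rangle = \Psi^{i}(\xinput) - \beta^{i}$). Subtracting the identities for $i^*$ and $j^*$ yields
\begin{equation}
\Psi^{i^*}(\xinput) - \Psi^{j^*}(\xinput) = \langle \xinput, g\rangle + (\beta^{i^*} - \beta^{j^*}).
\end{equation}
Bounding the bias difference by a constant, $|\beta^{i^*} - \beta^{j^*}| \le C$, and using $\langle \xinput, g\rangle \le |\langle \xinput, g\rangle|$, I would conclude
\begin{equation}
\rho(\xinput) \le \frac{|\langle \xinput, g\rangle| + C}{\| g \|} = \alpha(\xinput) + \frac{C}{\| g \|},
\end{equation}
which is exactly the claimed bound.

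The main obstacle is making the decomposition rigorous and justifying that the remainder $\beta^{i^*} - \beta^{j^*}$ is uniformly bounded by a single constant $C$ over the relevant input domain. For an exactly degree-one positively homogeneous network (no biases) the remainder vanishes and one may take $C = 0$; with biases present one must argue the per-region bias terms stay bounded, which is precisely where the \emph{loose} character of the bound (as the paper describes it) enters. I would also need to handle non-differentiability on the measure-zero boundaries between linear regions, and confirm that passing to $|\langle \xinput, g\rangle|$ is what ties robustness to the alignment $\alpha(\xinput)$ rather than to the signed inner product, so that the inequality is informative rather than vacuous.
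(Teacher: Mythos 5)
The paper itself contains no proof of this theorem: it is imported verbatim from \citet{etmann2019connection}, so the only meaningful comparison is with that source's argument. Your proposal is correct and is essentially that proof --- bound the minimum defining $\rho(\xinput)$ by the single $j = j^*$ term, write each logit as $\Psi^{i}(\xinput) = \langle \nabla_{\xinput}\Psi^{i}(\xinput), \xinput\rangle + \beta^{i}(\xinput)$ so that the numerator becomes $\langle \xinput, g\rangle + (\beta^{i^*} - \beta^{j^*})$, and absorb the bias difference into the additive term. The only point worth flagging is that in the original statement the additive term is the $\xinput$-dependent quantity $|\beta^{i^*}(\xinput) - \beta^{j^*}(\xinput)|/\|g\|$ rather than a uniform constant; your closing paragraph correctly identifies that promoting this to a single $C$ (as the version quoted in this paper does) is precisely where the looseness of the bound resides, and that it disappears for bias-free, positively homogeneous networks.
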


Combining with what we have in Theorem~\ref{theorem:generator identity mapping}, assuming $J$ to be close to $g$ in a fixed constant term, the alignment term $\alpha(\xinput)$ in Equation~(\ref{eq:linearized robustness bound}) is maximum when $\mathcal{L}_{\text{adv}}$ reaches its global minimum. Though this is not a strict upper bound and, to facilitate the training in JARN in practice, we use an adaptor network to transform the Jacobian, i.e., $J' = f_{\text{apt}}(J)$, our experiments show that model robustness can be improved with this adversarial regularization.

\section{Experiments}
We conduct experiments on three image datasets, MNIST, SVHN and CIFAR-10 to evaluate the adversarial robustness of models trained by JARN.

\subsection{MNIST} \label{sec:mnist results}
\paragraph{Setup}
MNIST consists of 60k training and 10k test binary-colored images. We train a CNN, sequentially composed of 3 convolutional layers and 1 final softmax layer. All 3 convolutional layers have a stride of 5 while each layer has an increasing number of output channels (64-128-256). For JARN, we use $\lambda_{\text{adv}} = 1$, a discriminator network of 2 CNN layers (64-128 output channels) and update it for every 10 $f_{\text{cls}}$ training iterations. We evaluate trained models against adversarial examples with $l_{\infty}$ perturbation $\varepsilon = 0.3$, crafted from FGSM and PGD (5 \& 40 iterations). FGSM generates weaker adversarial examples with only one gradient step and is weaker than the iterative PGD method.

\paragraph{Results} 
The CNN trained with JARN shows improved adversarial robustness from a standard model across the three types of adversarial examples (Table~\ref{tab:mnist}). In the MNIST experiments, we find that data augmentation with uniform noise to pixels alone provides no benefit in robustness from the baseline.

\begin{table}[ht]
    \centering
    \footnotesize
    \caption{MNIST accuracy (\%) on adversarial and clean test samples.}
        \begin{tabular}{ lccc|c }
         \hline
         Model & FGSM & PGD5 & PGD40 & Clean \\
         \hline
         Standard & 76.5 & 0 & 0 & 98.7 \\
         Uniform Noise & 77.5 & 0 & 0.02 & 98.7 \\
         JARN & \textbf{98.4} & \textbf{98.1} & \textbf{98.1} & 98.8 \\
         \hline
        \end{tabular}
\label{tab:mnist}
\end{table}

\subsection{SVHN}
\paragraph{Setup}
SVHN is a 10-class house number image classification dataset with 73257 training and 26032 test images, each of size $32\times32\times3$. We train the Wide-Resnet model following hyperparameters from \citep{madry2017towards}'s setup for their CIFAR-10 experiments. For JARN, we use $\lambda_{\text{adv}} = 5$, a discriminator network of 5 CNN layers (16-32-64-128-256 output channels) and update it for every 20 $f_{\text{cls}}$ training iterations. We evaluate trained models against adversarial examples with ($\varepsilon = 8/255$), crafted from FGSM and 5, 10, 20-iteration PGD attack.

\paragraph{Results}
Similar to the findings in \S~\ref{sec:mnist results}, JARN advances the adversarial robustness of the classifier from the standard baseline against all four types of attacks. Interestingly, uniform noise image augmentation increases adversarial robustness from the baseline in the SVHN experiments, concurring with previous work that shows noise augmentation improves robustness \citep{ford2019adversarial}.

\begin{table}[ht]
    \centering
    \footnotesize
    \caption{SVHN accuracy (\%) on adversarial and clean test samples.}
        \begin{tabular}{ lcccc|c }
         \hline
         Model & FGSM & PGD5 & PGD10 & PGD20 & Clean \\
         \hline
         Standard & 64.4 & 26.0 & 5.47 & 1.96 & 94.7 \\
         Uniform Noise & 65.0 & 42.6 & 18.4 & 9.21 & 95.3 \\
         JARN  & \textbf{67.2} & \textbf{57.5} & \textbf{37.7} & \textbf{26.79} & 94.9 \\
         \hline
        \end{tabular}
\label{tab:svhn}
\end{table}

\clearpage

\subsection{CIFAR-10}
\paragraph{Setup} \label{sec:setup}
CIFAR-10 contains $32\times32\times3$ colored images labeled as 10 classes, with 50k training and 10k test images. We train the Wide-Resnet model using similar hyperparameters to \citep{madry2017towards} for our experiments. Following the settings from \citet{madry2017towards}, we compare with a strong adversarial training baseline (PGD-AT7) that involves training the model with adversarial examples generate with 7-iteration PGD attack. For JARN, we use $\lambda_{\text{adv}} = 1$, a discriminator network of 5 CNN layers (32-64-128-256-512 output channels) and update it for every 20 $f_{\text{cls}}$ training iterations. We evaluate trained models against adversarial examples with ($\varepsilon = 8/255$), crafted from FGSM and PGD (5, 10 \& 20 iterations). We also add in a fast gradient sign attack baseline (FGSM-AT1) that generates adversarial training examples with only 1 gradient step. Though FGSM-trained models are known to rely on obfuscated gradients to counter weak attacks, we augment it with JARN to study if there is additive robustness benefit against strong attacks. We also implemented double backpropagation \citep{drucker1991double,ross2018improving} to compare. 

\paragraph{Results}
Similar to results from the previous two datasets, the JARN classifier performs better than the standard baseline for all four types of adversarial examples. Compared to the model trained with uniform-noise augmentation, JARN performs closely in the weaker FGSM attack while being more robust against the two stronger PGD attacks. JARN also outperforms the double backpropagation baseline, showing that regularizing for salient Jacobians confers more robustness than regularizing for smaller Jacobian Frobenius norm values. The strong PGD-AT7 baseline shows higher robustness against PGD attacks than the JARN model. When we train JARN together with 1-step adversarial training (JARN-AT1), we find that the model's robustness exceeds that of strong PGD-AT7 baseline on all four adversarial attacks, suggesting JARN's gain in robustness is additive to that of AT.

\begin{table}[ht]
    \centering
    \footnotesize
    \caption{CIFAR-10 accuracy (\%) on adversarial and clean test samples.}
        \begin{tabular}{ lcccc|c }
         \hline
         Model & FGSM & PGD5 & PGD10 & PGD20 & Clean \\
         \hline
         Standard & 13.4 & 0 & 0 & 0 & 95.0 \\
         Uniform Noise & 67.4 & 44.6 & 19.7 & 7.48 & 94.0 \\
         FGSM-AT1 & \textbf{94.5} & 0.25 & 0.02 & 0.01 & 91.7 \\
         Double Backprop & 28.3 & 0.05 & 0 & 0 & 95.7 \\
         JARN  & 67.2 & 50.0 & 27.6 & 15.5 & 93.9 \\
         PGD-AT7 & 56.2 & 55.5 & 47.3 & 45.9 & 87.3 \\
         JARN-AT1 & 65.7 & \textbf{60.1} & \textbf{51.8} & \textbf{46.7} & 84.8 \\
         \hline
        \end{tabular}
\label{tab:cifar10}
\end{table}

\subsubsection{Generalization of Robustness}
Adversarial training (AT) based defenses generally train the model on examples generated by perturbation of a fixed $\varepsilon$. Unlike AT, JARN by itself does not have $\varepsilon$ as a training parameter. To study how JARN-AT1 robustness generalizes, we conduct PGD attacks of varying $\varepsilon$ and strength (5, 10 and 20 iterations). We also include another PGD-AT7 baseline that was trained at a higher $\varepsilon = (12/255)$. JARN-AT1 shows higher robustness than the two PGD-AT7 baselines against attacks with higher $\varepsilon$ values ($\leq 8/255$) across the three PGD attacks, as shown in Figure~\ref{fig:grid adv eval}. We also observe that the PGD-AT7 variants outperform each other on attacks with $\varepsilon$ values close to their training $\varepsilon$, suggesting that their robustness is more adapted to resist adversarial examples that they are trained on. This relates to findings by \citet{tramer2019adversarial} which shows that robustness from adversarial training is highest against the perturbation type that models are trained on.

\begin{figure}[ht]
    \centering
    \includegraphics[width=0.85\textwidth]{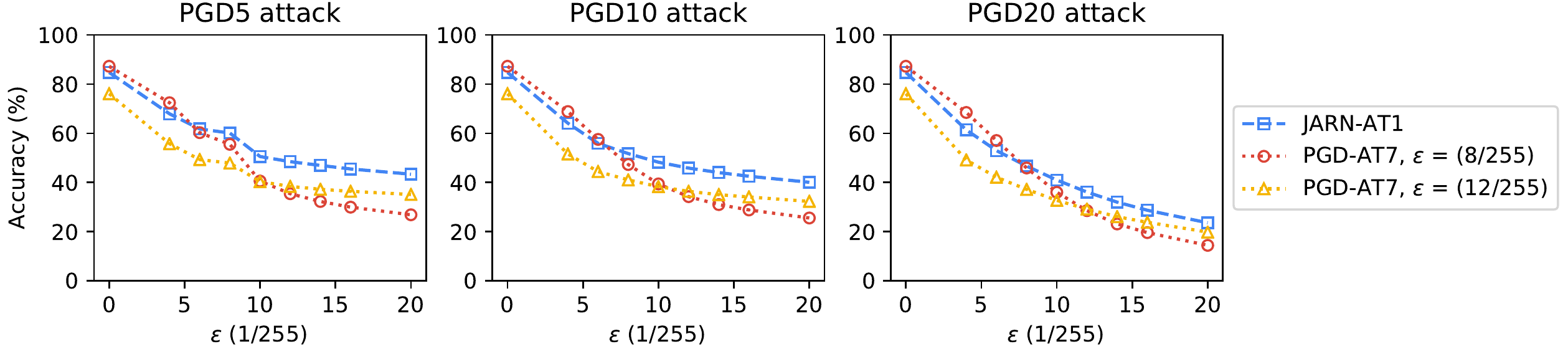}
    \caption{Generalization of model robustness to PGD attacks of different $\varepsilon$ values.}
    \label{fig:grid adv eval}
\end{figure}

\subsubsection{Loss Landscape}
We compute the classification loss value along the adversarial perturbation's direction and a random orthogonal direction to analyze the loss landscape of the models. From Figure~\ref{fig:loss landscape}, we see that the models trained by the standard and FGSM-AT method display loss surfaces that are jagged and non-linear. This explains why the FGSM-AT display modest accuracy at the weaker FGSM attacks but fail at attacks with more iterations, a phenomenon called obfuscated gradients \citep{carlini2017towards,uesato2018adversarial} where the initial gradient steps are still trapped within the locality of the input but eventually escape with more iterations. The JARN model displays a loss landscape that is less steep compared to the standard and FGSM-AT models, marked by the much lower (1 order of magnitude) loss value in Figure~\ref{fig:loss landscape c}. When JARN is combined with one iteration of adversarial training, the JARN-AT1 model is observed to have much smoother loss landscapes, similar to that of the PGD-AT7 model, a strong baseline previously observed to be free of obfuscated gradients. This suggests that JARN and AT have additive benefits and JARN-AT1's adversarial robustness is not attributed to obfuscated gradients. 

A possible explanation behind the improved robustness through increasing Jacobian saliency is that the space of Jacobian shrinks under this regularization, i.e., Jacobians have to resemble non-noisy images. Intuitively, this means that there would be fewer paths for an adversarial example to reach an optimum in the loss landscape, improving the model's robustness.

\begin{figure}[h]
\begin{subfigure}{.33\textwidth}
  \centering
  \includegraphics[width=\linewidth]{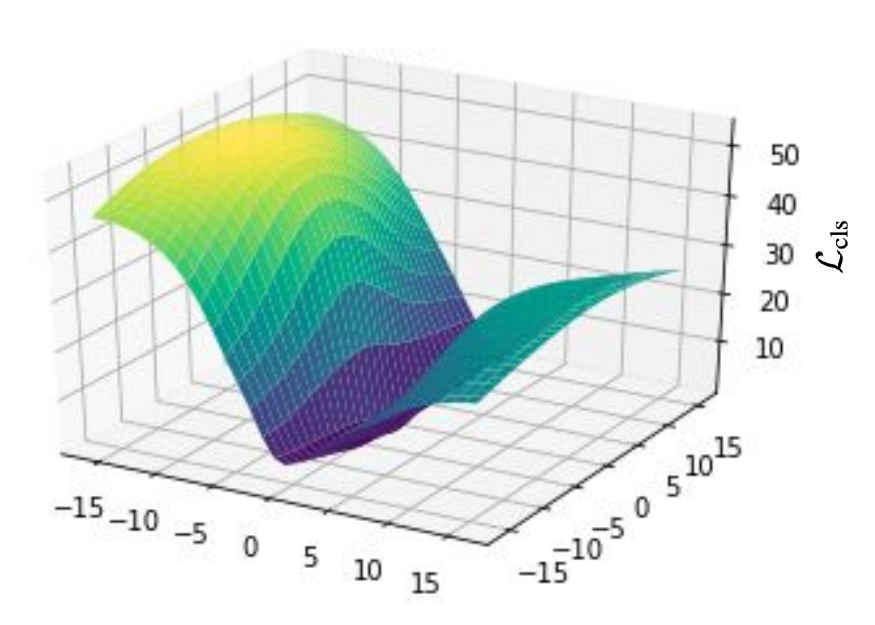}
  \caption{Standard}
  \label{fig:loss landscape a}
\end{subfigure}%
\begin{subfigure}{.33\textwidth}
  \centering
  \includegraphics[width=\linewidth]{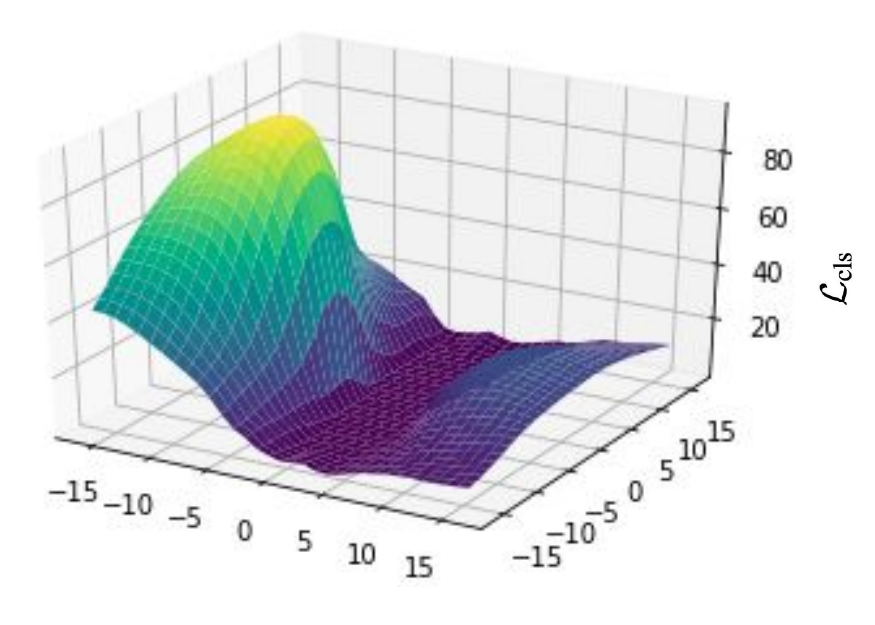}
  \caption{FGSM-AT1}
  \label{fig:loss landscape b}
\end{subfigure}
\begin{subfigure}{.33\textwidth}
  \centering
  \includegraphics[width=\linewidth]{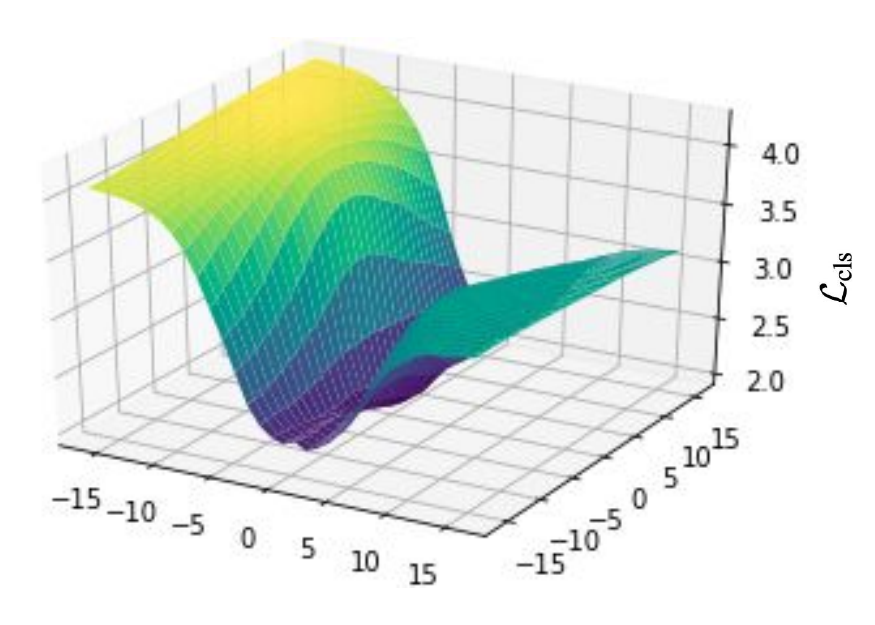}
  \caption{JARN}
  \label{fig:loss landscape c}
\end{subfigure}
\begin{subfigure}{.5\textwidth}
  \centering
  \includegraphics[width=0.66\linewidth]{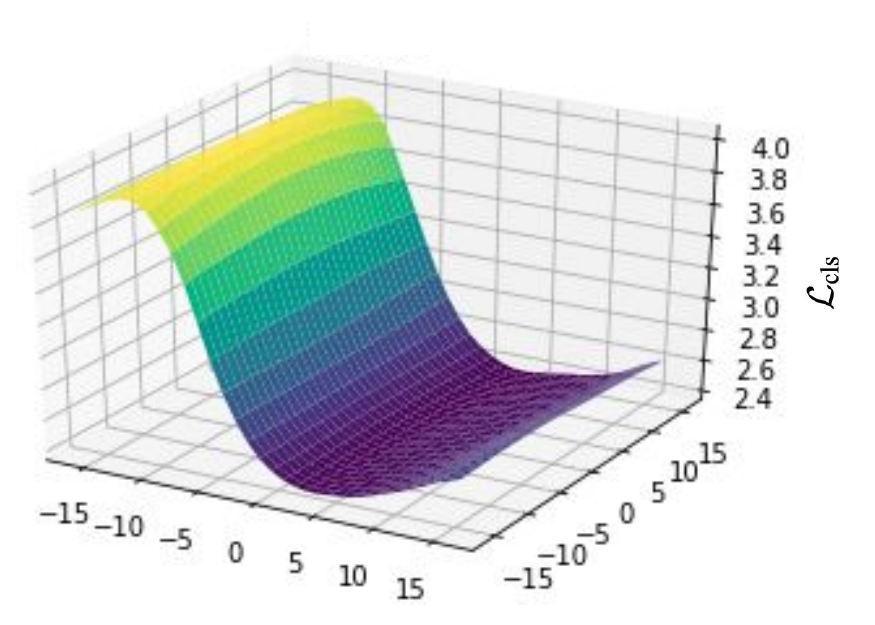}
  \caption{JARN-AT1}
  \label{fig:loss landscape d}
\end{subfigure}%
\begin{subfigure}{.5\textwidth}
  \centering
  \includegraphics[width=0.66\linewidth]{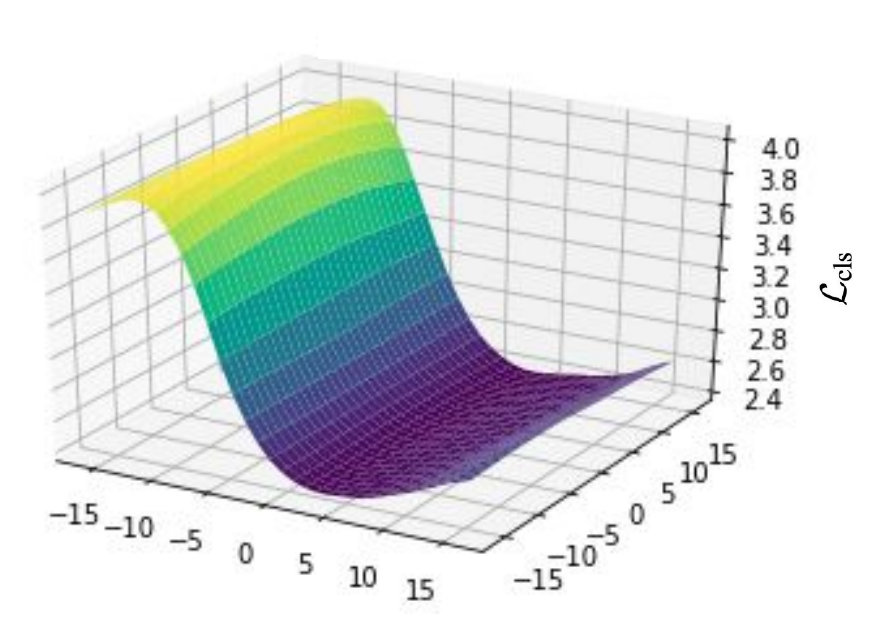}
  \caption{PGD-AT7}
  \label{fig:loss landscape e}
\end{subfigure}
\caption{Loss surfaces of models along the adversarial perturbation and a random direction.}
\label{fig:loss landscape}
\end{figure}

\subsubsection{Saliency of Jacobian}
The Jacobian matrices of JARN model and PGD-AT are salient and visually resemble the images more than those from the standard model (Figure~\ref{fig:model jacobians}). Upon closer inspection, the Jacobian matrices of the PGD-AT model concentrate their values at small regions around the object of interest whereas those of the JARN model cover a larger proportion of the images. One explanation is that the JARN model is trained to fool the discriminator network and hence generates Jacobian that contains details of input images to more closely resemble them.

\begin{figure}[ht]
    \centering
    \includegraphics[width=0.8\textwidth]{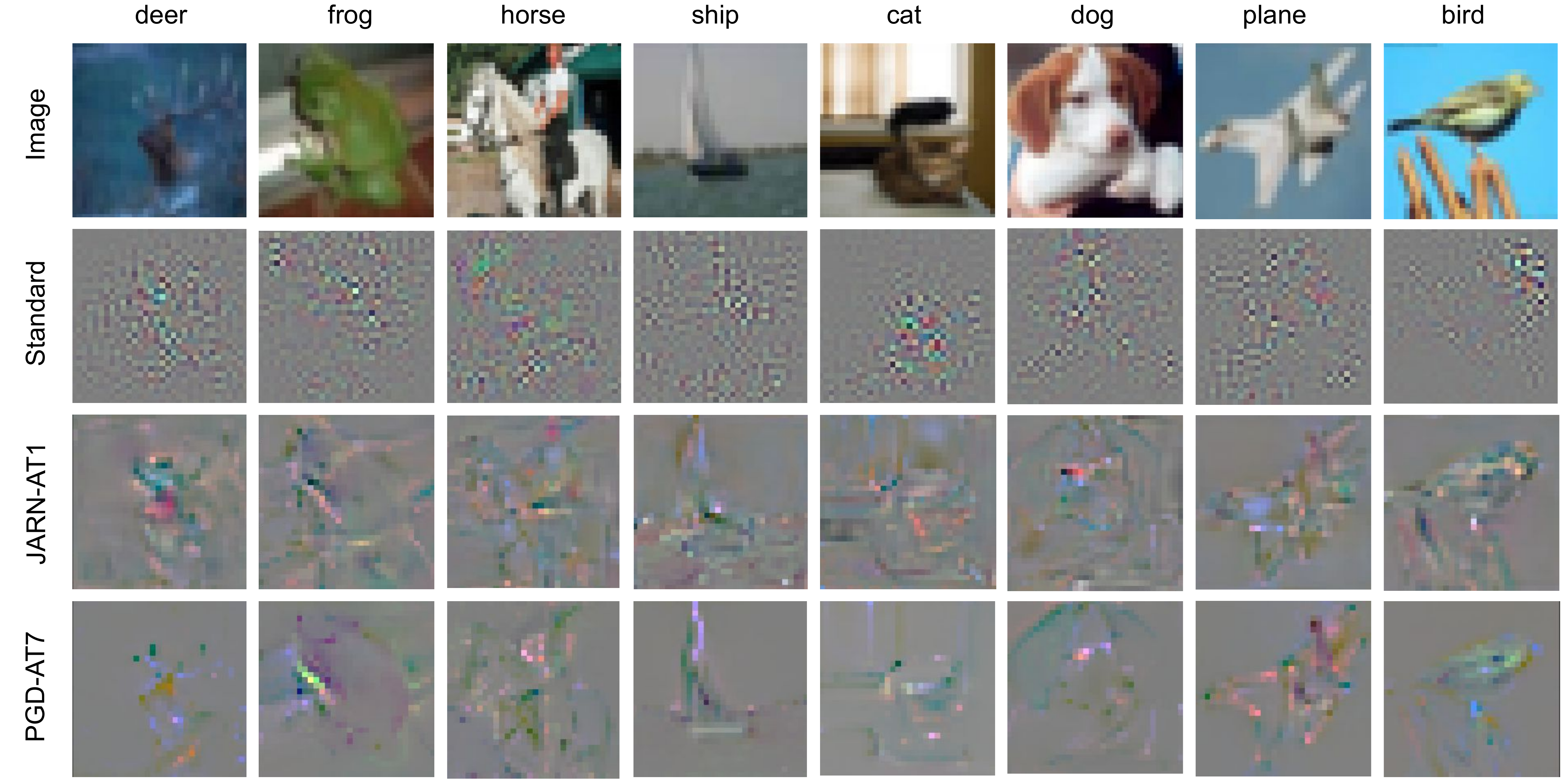}
    \caption{Jacobian matrices of CIFAR-10 models.}
    \label{fig:model jacobians}
\end{figure}


\subsubsection{Compute Time}
Training with JARN is computationally more efficient when compared to adversarial training (Table~\ref{tab:wall clock time}). Even when combined with FGSM adversarial training JARN, it takes less than half the time of 7-step PGD adversarial training while outperforming it in robustness.

\begin{table}[ht]
    \centering
    \footnotesize
    \caption{Average wall-clock time per training epoch for CIFAR-10 adversarial defenses.}
        \begin{tabular}{ lcccc }
         \hline
         Model & PGD-AT7 & JARN-AT1 & FGSM-AT1 & JARN only  \\
         \hline
         Time (sec) & 704 & 294 & 267 & 217 \\
         \hline
        \end{tabular}
\label{tab:wall clock time}
\end{table}

\subsubsection{Sensitivity to Hyperparameters}
The performance of GANs in image generation has been well-known to be sensitive to training hyperparameters. We test JARN performance across a range of $\lambda_{adv}$, batch size and discriminator update intervals that are different from \S~\ref{sec:setup} and find that its performance is relatively stable across hyperparameter changes, as shown in Appendix Figure~\ref{fig:JARN hyperparams}. In a typical GAN framework, each training step involves a real image sample and an image generated from noise that is decoupled from the real sample. In contrast, a Jacobian is conditioned on its original input image and both are used in the same training step of JARN. This training step resembles that of VAE-GAN \citep{larsen2015autoencoding} where pairs of real images and its reconstructed versions are used for training together, resulting in generally more stable gradients and convergence than GAN. We believe that this similarity favors JARN's stability over a wider range of hyperparameters.

\subsubsection{Black-box Transfer Attacks}
Transfer attacks are adversarial examples generated from an alternative, substitute model and evaluated on the defense to test for gradient masking \citep{papernot2016transferability,carlini2019evaluating}. More specifically, defenses relying on gradient masking will display lower robustness towards transfer attacks than white-box attacks. When evaluated on such black-box attacks using adversarial examples generated from a PGD-AT7 trained model and their differently initialized versions, both JARN and JARN-AT1 display higher accuracy than when under white-box attacks (Table~\ref{tab:cifar10 transfer}). This demonstrates that JARN's robustness does not rely on gradient masking. Rather unexpectedly, JARN performs better than JARN-AT1 under the PGD-AT7 transfer attacks, which we believe is attributed to its better performance on clean test samples.

\begin{table}[ht]
    \centering
    \footnotesize
    \caption{CIFAR-10 accuracy (\%) on transfer attack where adversarial examples are generated from a PGD-AT7 trained model.}
        \begin{tabular}{ l|cc|cc|cc|c }
         \hline
         \ \multirow{2}{*}{Model} & \multicolumn{2}{c|}{PGD-AT7} & \multicolumn{2}{c|}{Same Model} & \multicolumn{2}{c|}{White-box} & \multirow{2}{*}{Clean}\\
          & FGSM & PGD20 & FGSM & PGD20 & FGSM & PGD20 &  \\
         \hline
         JARN     & 79.6 & 76.7 & 73.6 & 17.4 & 67.2 & 15.5 & 93.9 \\
         JARN-AT1 & 66.4 & 63.0 & 70.3 & 59.3 & 65.7 & 46.7 & 84.8 \\
         \hline
        \end{tabular}
\label{tab:cifar10 transfer}
\end{table}



\section{Conclusions}
In this paper, we show that training classifiers to give more salient input Jacobian matrices that resemble images can advance their robustness against adversarial examples. We achieve this through an adversarial regularization framework (JARN) that train the model's Jacobians to fool a discriminator network into classifying them as images. Through our experiments in three image datasets, JARN boosts adversarial robustness of standard models and give competitive performance when added on to weak defenses like FGSM. Our findings open the viability of improving the saliency of Jacobian as a new avenue to boost adversarial robustness.

\subsubsection*{Acknowledgments}
This work is funded by the National Research Foundation, Singapore under its AI Singapore programme [Award No.: AISG-RP-2018-004] and the Data Science and Artificial Intelligence Research Center (DSAIR) at Nanyang Technological University.

\bibliography{iclr2020}
\bibliographystyle{iclr2020_conference}

\appendix

\newpage

\section{Proof of Theorem~\ref{theorem:generator identity mapping}} \label{sec:generator identity mapping}
\begin{theorem} 
The global minimum of $\mathcal{L}_{\text{adv}}$ is achieved when $G(\xinput)$ maps $\xinput$ to itself, i.e., $G(\xinput) = \xinput$.
\end{theorem}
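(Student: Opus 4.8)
The plan is to follow the standard two-step minimax analysis of generative adversarial networks due to \citet{goodfellow2014generative}, adapted to our setting where the generator output distribution is $p_{J'}$ and the reference distribution is $p_{\xinput}$. First I would fix the generator $G$ (equivalently, fix the induced distribution $p_{J'}$) and solve the inner maximization over the discriminator $f_{\text{disc}}$. Writing $\mathcal{L}_{\text{adv}}$ as a single integral over the shared sample space,
\begin{equation}
\mathcal{L}_{\text{adv}} = \int \left[ p_{\xinput}(\vx)\, \log f_{\text{disc}}(\vx) + p_{J'}(\vx)\, \log\big(1 - f_{\text{disc}}(\vx)\big) \right] d\vx ,
\end{equation}
I would observe that the integrand has the form $a\log y + b\log(1-y)$, which for $(a,b)$ in the positive quadrant is maximized pointwise at $y^\ast = a/(a+b)$. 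This yields the optimal discriminator $f_{\text{disc}}^\ast(\vx) = p_{\xinput}(\vx)\,/\,\big(p_{\xinput}(\vx) + p_{J'}(\vx)\big)$.

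Next I would substitute $f_{\text{disc}}^\ast$ back into $\mathcal{L}_{\text{adv}}$ to obtain a reduced objective expressed purely through $G$,
\begin{equation}
C(G) = \mathbb{E}_{\vx \sim p_{\xinput}}\!\left[ \log \frac{p_{\xinput}(\vx)}{p_{\xinput}(\vx) + p_{J'}(\vx)} \right] + \mathbb{E}_{\vx \sim p_{J'}}\!\left[ \log \frac{p_{J'}(\vx)}{p_{\xinput}(\vx) + p_{J'}(\vx)} \right] ,
\end{equation}
and then rewrite it by inserting a normalizing factor of two inside each logarithm, giving $C(G) = -\log 4 + 2\,\mathrm{JSD}(p_{\xinput}\,\|\,p_{J'})$, where $\mathrm{JSD}$ denotes the Jensen--Shannon divergence. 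Since the Jensen--Shannon divergence is non-negative and vanishes if and only if its two arguments coincide, $C(G)$ attains its global minimum value $-\log 4$ exactly when $p_{J'} = p_{\xinput}$.

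Finally I would close the argument by exhibiting a generator that realizes this distributional equality: because $G$ is deterministic and conditioned on its input $\vx$, the output distribution $p_{J'}$ is the pushforward of $p_{\xinput}$ under $G$, so taking $G(\vx) = \vx$ makes $p_{J'}$ identical to $p_{\xinput}$ and hence achieves the global minimum. This establishes the claim as stated, namely that the identity mapping attains the optimum, with the infinite-capacity assumption guaranteeing that $G$ can in fact represent this map.

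The step I expect to require the most care is the last one. The GAN analysis only delivers the \emph{distributional} statement $p_{J'} = p_{\xinput}$, whereas the theorem asserts the \emph{pointwise} identity $G(\vx) = \vx$; these are not equivalent, since any measure-preserving transport of $p_{\xinput}$ onto itself would minimize $C(G)$ equally well. The theorem should therefore be read as asserting that the identity is \emph{a} minimizer (sufficiency), not the unique one, and I would make this reading explicit to avoid overclaiming uniqueness.
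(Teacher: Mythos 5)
Your proposal is correct and follows essentially the same route as the paper's proof: obtain the optimal discriminator $f_{\text{disc}}^*(\xinput) = p_{\xinput}(\xinput)/\left(p_{\xinput}(\xinput) + p_{J'}(\xinput)\right)$ (which you derive pointwise rather than cite), reduce $\mathcal{L}_{\text{adv}}$ to $2\,\mathrm{JS}\left(p_{\xinput} \,\|\, p_{J'}\right) - \log 4$, and observe that $G(\xinput) = \xinput$ makes the two distributions coincide so the Jensen--Shannon term vanishes. Your closing caveat---that this argument establishes the identity map as \emph{a} global minimizer rather than the unique one---is accurate and is also the correct reading of the paper's own proof, which likewise verifies only sufficiency.
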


\begin{proof}
From \citep{goodfellow2014generative}, for a fixed $G$, the optimal discriminator is
\begin{equation}
f_{\text{disc}}^* (\xinput) =  \frac{p_{\text{data}} (\xinput)}{p_{\text{data}} (\xinput) + p_{G} (\xinput)}
\end{equation}

We can include the optimal discriminator into Equation~(\ref{eq:adv loss}) to get
\begin{equation}
    \begin{aligned}
\mathcal{L}_{\text{adv}} (G)
& = \mathbb{E}_{\xinput \sim p_{\text{data}}} [ \log f_{\text{disc}}^*(\mathbf{x}) ] + \mathbb{E}_{\xinput \sim p_{\text{data}}} [\log (1 - f_{\text{disc}}^*(G(\xinput)) )] \\
& = \mathbb{E}_{\xinput \sim p_{\text{data}}} [ \log f_{\text{disc}}^*(\mathbf{x}) ] + \mathbb{E}_{\xinput \sim p_{\text{G}}} [\log (1 - f_{\text{disc}}^*(\xinput) )] \\
& = \mathbb{E}_{\xinput \sim p_{\text{data}}} \left[ \log \frac{p_{\text{data}} (\xinput)}{p_{\text{data}} (\xinput) + p_{G} (\xinput)} \right] + \mathbb{E}_{\xinput \sim p_{\text{G}}} \left[ \log \frac{p_{\text{G}} (\xinput)}{p_{\text{data}} (\xinput) + p_{G} (\xinput)} \right] \\
& = \mathbb{E}_{\xinput \sim p_{\text{data}}} \left[ \log \frac{p_{\text{data}} (\xinput)}{\frac{1}{2} (p_{\text{data}} (\xinput) + p_{G} (\xinput) )} \right] + \mathbb{E}_{\xinput \sim p_{\text{G}}} \left[ \log \frac{p_{\text{G}} (\xinput)}{\frac{1}{2} (p_{\text{data}} (\xinput) + p_{G} (\xinput) )} \right] -2 \log 2\\
& = \klinfdiv*{p_{\text{data}}}{\frac{p_{\text{data}}  + p_{G} }{2}} + \klinfdiv*{p_{\text{G}}}{\frac{p_{\text{data}}+ p_{G} }{2}} - \log 4\\
& = 2 \cdot JS ( p_{\text{data}} || p_{\text{G}} ) - \log 4 \\
    \end{aligned}
\end{equation}

where $KL$ and $JS$ are the Kullback-Leibler and Jensen-Shannon divergence respectively. Since the Jensen-Shannon divergence is always non-negative, $\mathcal{L}_{\text{adv}} (G)$ reaches its global minimum value of $- \log 4$ when $JS ( p_{\text{data}} || p_{\text{G}} ) = 0$. When $G(\xinput) = \xinput$, we get $p_{\text{data}} = p_{\text{G}}$ and consequently $JS ( p_{\text{data}} || p_{\text{G}} ) = 0$, thus completing the proof.

\end{proof}


\section{Sensitivity to Hyperparameters}

\begin{figure}[h]
    \centering
    \includegraphics[width=\textwidth]{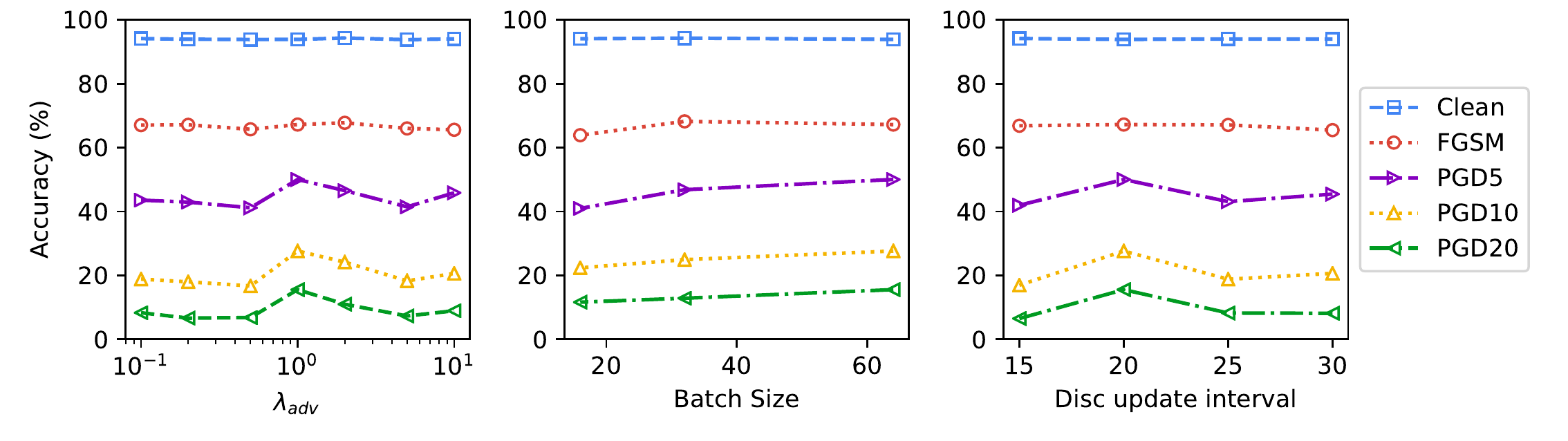}
    \caption{Accuracy of JARN with different hyperparameters on CIFAR-10 test samples.}
    \label{fig:JARN hyperparams}
\end{figure}

\end{document}